\documentclass{article}

\usepackage[preprint,eandd,nonatbib]{neurips_2026}
\usepackage[utf8]{inputenc}
\usepackage[T1]{fontenc}
\usepackage{courier}
\usepackage{microtype}
\usepackage{amsmath,amssymb,amsfonts,amsthm}
\usepackage{booktabs}
\usepackage{array}
\usepackage{graphicx}
\usepackage{xcolor}
\usepackage{enumitem}
\usepackage{hyperref}
\usepackage[nameinlink,noabbrev]{cleveref}
\usepackage{caption}
\usepackage{subcaption}

\hypersetup{
  colorlinks=true,
  linkcolor=blue!60!black,
  citecolor=blue!60!black,
  urlcolor=blue!60!black,
}

\newtheorem{theorem}{Theorem}
\newtheorem{proposition}[theorem]{Proposition}
\newtheorem{definition}[theorem]{Definition}

\theoremstyle{remark}

\theoremstyle{plain}

\newcommand{\method}{\textsc{MemAudit}}
\newcommand{\compiler}{\textsc{GVT}}
\newcommand{\budget}{B}
\newcommand{\experiences}{\mathcal{E}}
\newcommand{\ground}{\mathcal{U}}

\newcommand{\units}{\mathcal{R}}
\newcommand{\feasible}{\mathcal{F}}
\newcommand{\package}{\mathcal{P}}
\newcommand{\opt}{\mathrm{OPT}}

\title{\method:\\
An Exact Package-Oracle Evaluation Protocol\\
for Budgeted Long-Term LLM Memory Writing}

\author{%
Nishant Bhargava\\
Purdue University\\
\texttt{bharga57@purdue.edu}
\And
Rodrigo Sobral Barrento\\
Purdue University\\
\texttt{rbarrent@purdue.edu}
}
\date{}

\begin{document}
\maketitle

\begin{abstract}
Long-term LLM agents must compress streams of past interactions into persistent memory before future queries are known. Existing evaluations usually measure final question-answering accuracy, which entangles memory writing with retrieval, prompting, and reader reasoning. We introduce \method, an exact package-oracle evaluation protocol for budgeted long-term memory writing. A \method\ package fixes an experience stream, candidate memory representations, storage costs, semantic evidence units, future-query requirements, and a budget, turning write-time memory selection into a finite auditable optimization problem with a certified denominator. We instantiate this protocol with a concave-over-modular semantic coverage objective under storage and one-representation-per-experience constraints, and compute exact package optima using branch-and-bound with MILP certification. Across controlled exact packages, validity-heavy stress tests, human-audited natural support slices, and exported Mem0, A-Mem, and Letta stores, \method\ separates representation quality, validity-state preservation, and budget-aware selection effects that end-to-end QA cannot localize. The resulting artifact provides reusable package generators, certified solvers, natural package exports, external-system scorers, and cached reproducibility metadata for evaluating what memory writers actually preserve under fixed storage budgets.
\end{abstract}

\section{Introduction}

LLM agents increasingly operate across sessions: they converse with users, call tools, edit code, inspect documents, and later need to reuse what happened. A fixed context window makes the naive policy of retaining everything impossible. The agent must compile an experience stream into persistent memory, choosing not only whether to store an item but also which representation should survive: raw span, extracted fact, temporal event, graph edge, summary, rule, skill, compound update, or tombstone.

This paper argues that long-term memory writing should be evaluated as finite semantic compression, not only as an architecture choice. The load-bearing object is a package with an exact package oracle: for a fixed budget, finite candidate memories, and fixed future-query requirements, how close is a writer to the best package-feasible store? This separates write quality from downstream retrieval and reader behavior. A system may fail because it wrote the wrong memory, because retrieval missed a good memory, or because the reader ignored evidence; final QA accuracy alone cannot identify which layer failed.

The distinction matters because most deployed memory systems are evaluated only after several other choices have intervened. A memory writer may extract a useful fact but store it in a form too expensive to keep under the target budget. A retriever may miss the relevant memory even if the writer preserved it. A reader may see the evidence and still answer incorrectly. These are different engineering failures. \method\ isolates the first one by freezing a finite candidate set, a storage accounting rule, and an evidence-unit objective. The resulting denominator is not a claim about all possible memories; it is a reproducible answer to the narrower question ``given this package, how much of the package-feasible semantic value did the writer preserve?''

\begin{figure}[t]
\centering
\includegraphics[width=\linewidth,trim=0 24 0 0,clip]{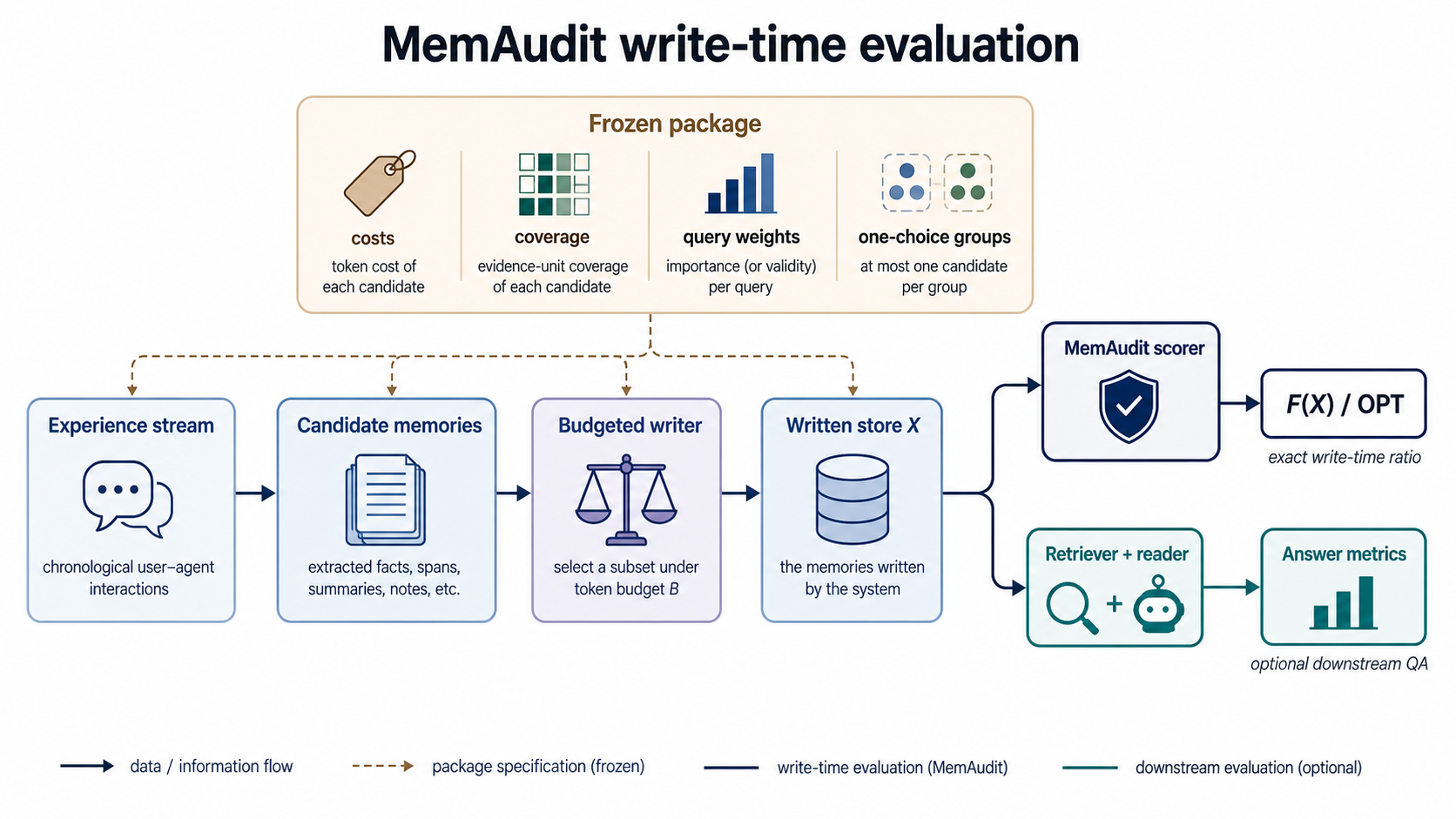}
\caption{\method\ evaluates the written memory store before retrieval and reader reasoning. The frozen package defines the exact write-time denominator, while downstream QA is measured separately through the optional retriever/reader branch.}
\label{fig:pipeline}
\end{figure}

\paragraph{Contributions.}
We make four evaluation-and-dataset contributions. (i) We define a finite \method\ package for budgeted memory writing with a package-oracle ratio and a union denominator for external stores. (ii) We instantiate the known concave-over-modular submodular coverage structure for semantic memory preservation. (iii) We provide exact/certified package optima and a package-oracle reference baseline, with an independent MILP audit. (iv) We release an initial \method\ suite spanning controlled exact packages, validity-heavy stress packages, human-validated model-adjudicated natural support slices with reported inter-annotator agreement, and exported-system packages for Mem0, A-Mem, and Letta.

\paragraph{E\&D artifact role.}
\method\ is designed to complement systems such as Mem0, Letta, A-Mem, MemGPT-style archival memory, and graph-based memories by scoring their exported writes under a shared denominator. The released artifact contains deterministic exact-package generators, certified solvers, cached natural package exports, adjudication summaries, exported Mem0/A-Mem/Letta memory stores, and scripts to reproduce the main tables and figures without additional API calls. API-backed construction is needed only to regenerate natural annotations or rerun external memory exports.

\paragraph{Relation to MemSim.}
MemSim/MemDaily \cite{zhang2024memsim} automatically constructs reliable QA pairs for evaluating personal-assistant memory through downstream answering. \method\ targets a complementary layer: it evaluates the write-time memory store itself by defining finite candidate representations, costs, evidence units, and an exact budgeted optimum. Thus MemSim asks whether an agent answers generated memory questions correctly, while \method\ asks how close a written memory store is to the best package-feasible semantic store before retrieval and reader reasoning are invoked. This also lets \method\ localize where the problem sits: candidate generation, representation choice, budget-aware selection, retrieval, or reader use.

\section{\method\ Package}\label{sec:package}

An LLM agent observes experiences $\experiences=(e_1,\ldots,e_T)$. Each experience can generate multiple candidate write choices, such as raw text, an atomic fact, a summary, a graph edge, a rule, a compound update, or a tombstone. Let
\[
\ground=\{(i,j): i\in[T],\ j\in J_i\setminus\{\mathrm{discard}\}\}
\]
be the virtual ground set of non-discard experience-representation choices. Each element $u=(i,j)$ has cost $c_u>0$, and $G_i=\{(i,j):j\in J_i\setminus\{\mathrm{discard}\}\}$ is the group of choices for experience $i$. A feasible memory store satisfies one storage budget and one choice per experience:
\begin{equation}
\feasible_\budget=
\left\{X\subseteq\ground:
\sum_{u\in X}c_u\leq\budget,\quad |X\cap G_i|\leq1\ \forall i
\right\}.
\label{eq:feasible}
\end{equation}
Thus the feasibility structure is the intersection of one knapsack constraint and one partition matroid.

This virtual-ground-set view makes representation choice explicit instead of treating memory as a homogeneous list of strings. For example, suppose a user first says they prefer vegetarian travel meals and later says they are pescatarian now. A future query asks what meal should be booked. The package may contain a stale vegetarian fact, a current pescatarian fact, a raw span containing both turns, a tombstone invalidating the old preference, and a compound update saying ``vegetarian is superseded by pescatarian.'' These candidates have different costs and cover different evidence units: current truth, invalidation, temporal order, or raw provenance. The constraint $|X\cap G_i|\leq1$ prevents the package optimum from keeping every representation of one experience and makes the denominator comparable to a writer that chooses one persistent form. Because some deployed systems keep raw+summary+fact variants, the artifact also supports a relaxed $|X\cap G_i|\leq k$ audit; on Natural-87, allowing $k=2$ changes exact OPT by only $1.3$--$6.0\%$ across the evaluated budgets (\Cref{app:sensitivity}).

The package also fixes storage accounting. In our experiments, the default cost is normalized word-equivalent storage over the serialized memory exposed to the writer/reader. This is a declared benchmark cost, not a universal systems-cost model: real systems also pay embedding-index slots, graph-node overhead, JSON metadata, and retrieval compute. To check that the main diagnostics are not an artifact of whitespace word counts, \Cref{app:sensitivity} recomputes the Natural-87 package and exported-store pruning under an alternative cost $c(u)=8+\lceil\mathrm{bytes}(u)/24\rceil$, approximating fixed per-record overhead plus serialized payload. The qualitative exported-system ordering is stable under this rule. When external systems export memories outside the package candidate set, those memories are added to a union package with their own measured costs rather than forced into the original candidate taxonomy.

The benchmark defines semantic evidence units $\units=\{r_1,\ldots,r_M\}$. A future query $q$ has required evidence units $R(q)\subseteq\units$ and nonnegative importance weights, inducing evidence weights $w_r\geq0$. Each candidate memory has a nonnegative coverage row $a_{ur}\in[0,1]$. The utility of a store is
\begin{equation}
F(X)=\sum_{r\in\units} w_r\,
h_r\!\left(\sum_{u\in X}a_{ur}\right),
\label{eq:coverage}
\end{equation}
where each $h_r$ is concave, nondecreasing, and normalized. The default package objective uses $h_r(z)=\min(1,z)$, so duplicate memories of the same evidence unit have diminishing marginal value.

Evidence units are the semantic atoms used by the benchmark. In a synthetic package they are generated from the hidden event graph; in a natural support slice they are source-backed units extracted from the support sessions and then mapped to future-query requirements. A unit can represent a fact, a temporal relation, an entity preference, a deletion, an abstention condition, or a validity-state update. The objective is positive coverage: stale-fact avoidance is represented by covering the evidence unit that says an older fact is no longer current, not by assigning negative utility to stale memories. This keeps the objective monotone while still allowing the benchmark to ask whether writers preserve current-truth information.

The concavity in \Cref{eq:coverage} encodes diminishing returns. If two candidates cover the same evidence unit, the second copy should usually help less than the first. If a query requires multiple evidence units, the weights $w_r$ distribute value across those requirements. The objective is therefore a semantic surrogate for write-time preservation, not a replacement for downstream QA. A reader can still fail with a high-$F$ store, and a low-$F$ store might answer an easy question by chance. The point is that $F$ gives a deterministic, package-local target for the writer layer.

\begin{theorem}[Semantic coverage is monotone submodular]\label{thm:coverage}
Let $w_r\geq0$, $a_{ur}\geq0$, and let each $h_r$ be concave, nondecreasing, and satisfy $h_r(0)=0$. Then $F$ in \Cref{eq:coverage} is normalized, monotone nondecreasing, and submodular on $2^\ground$.
\end{theorem}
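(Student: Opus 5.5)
The plan is to reduce everything to a single evidence unit and then use that nonnegative combinations preserve each property. Fix $r\in\units$ and define the modular function $m_r(X)=\sum_{u\in X}a_{ur}$, so that $F(X)=\sum_r w_r\,h_r(m_r(X))$. Normalization, monotonicity, and submodularity are each closed under nonnegative linear combinations: they are linear inequalities or equalities in the function values. It therefore suffices to show that each $g_r(X)=h_r(m_r(X))$ has all three properties. Since $w_r\geq0$, they then transfer to $F$.

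Normalization and monotonicity are immediate. First, $m_r(\emptyset)=0$, so $g_r(\emptyset)=h_r(0)=0$. Second, if $X\subseteq Y$ then $m_r(X)\leq m_r(Y)$ because every $a_{ur}\geq0$, and $h_r$ is nondecreasing, so $g_r(X)\leq g_r(Y)$.

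For submodularity I will use the diminishing-returns characterization. Take $X\subseteq Y\subseteq\ground$ and $u\notin Y$, and set $x=m_r(X)$, $y=m_r(Y)$, $\delta=a_{ur}\geq0$, so that $0\leq x\leq y$. The inequality to prove is
\[
h_r(x+\delta)-h_r(x)\;\geq\;h_r(y+\delta)-h_r(y).
\]
This is the standard fact that increments of a concave function over a fixed-length interval are nonincreasing in the left endpoint. The case $\delta=0$ is trivial. For $\delta>0$ and $x<y$, I will use the three-chord (slope) inequality for concave functions. The slope of a chord of $h_r$ over $[s,t]$ is nonincreasing in each endpoint, so
\[
\frac{h_r(x+\delta)-h_r(x)}{\delta}\;\geq\;\frac{h_r(y+\delta)-h_r(y)}{\delta}.
\]
To make this rigorous I would compare both chords with the chord over $[x,y+\delta]$, or equivalently write $x+\delta$ and $y$ as convex combinations of $x$ and $y+\delta$ with weights $\lambda$ and $1-\lambda$. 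Adding the two concavity inequalities then gives $h_r(x+\delta)+h_r(y)\geq h_r(x)+h_r(y+\delta)$.

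The only step needing care is this concavity inequality. It should be stated on the domain $[0,\infty)$ where $h_r$ is defined, and it must cover the case where the intervals overlap ($x+\delta>y$). The convex-combination argument handles both orderings uniformly: choose $\lambda$ with $x+\delta=\lambda x+(1-\lambda)(y+\delta)$; then $y=(1-\lambda)x+\lambda(y+\delta)$ with the same $\lambda$. The rest is bookkeeping, including the remark that the default $h_r(z)=\min(1,z)$ satisfies the hypotheses.
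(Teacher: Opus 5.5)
Your proposal is correct and follows the same route as the paper's proof. Normalization comes from $h_r(0)=0$, monotonicity from $a_{ur}\geq0$ and $h_r$ nondecreasing, and submodularity from the diminishing-returns form using that $h_r(z+a_{ur})-h_r(z)$ is nonincreasing in $z$, then weighting by $w_r\geq0$ and summing. Your convex-combination argument, with $\lambda=(y-x)/(y+\delta-x)$, just makes explicit the concavity step the paper asserts without proof.
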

The proof is the standard concave-over-modular diminishing-returns argument and is given in \Cref{app:proofs}. The theorem supports the semantic surrogate; it does not claim black-box LLM answer accuracy is submodular.

\begin{definition}[\method\ package and package ratio]\label{def:package}
An \method\ package is a tuple
\[
\package=(\ground,\mathcal{G},c,\units,A,w,\budget).
\]
Its exact package optimum and package ratio are
\[
\opt_{\package}(\budget)=\max_{X\in\feasible_\budget(\package)}F_{\package}(X),
\qquad
\rho_{\package}(X)=F_{\package}(X)/\opt_{\package}(\budget).
\]
\end{definition}

\begin{definition}[Union denominator for external stores]\label{def:union}
If an external memory system writes memories $Y$ not contained in $\ground$, we evaluate them in the finite union package $\package^+(Y)$ obtained by adding $Y$, their costs, and their adjudicated coverage rows. The external-store ratio is
\[
\rho_{\mathrm{union}}(Y)=
\frac{F_{\package^+(Y)}(Y)}{\opt_{\package^+(Y)}(\budget)}.
\]
We also report an analysis-only upper-pruned bound over subsets of $Y$ to separate extraction quality from budget-aware selection.
\end{definition}

The union denominator is essential for scoring real systems. Suppose Mem0 writes a memory that is not one of our package candidates. Scoring it only against the package candidate optimum would be ambiguous: the system may have created a useful representation that the package did not contain. In $\package^+(Y)$, the exported memory becomes a first-class candidate with an adjudicated coverage row. The numerator scores exactly what the system wrote and retained, while the denominator asks what the best budget-feasible subset could have achieved using both package candidates and system exports. This makes low scores interpretable: a low raw exported-store ratio with a high upper-pruned bound indicates that the system extracted useful content but did not select or compact it well under budget.

\section{Exact Optima and Reference Writers}\label{sec:oracles}

Exact optimization is central to \method. For small packages, we compute $\opt_{\package}(\budget)$ by branch-and-bound over experience-representation assignments. For the default clipped-coverage objective, we also use a MILP certificate with binary candidate variables $x_u$ and coverage variables $y_r\leq\sum_u a_{ur}x_u$, $y_r\leq1$. Greedy or learned references are never labeled as OPT.

The branch-and-bound solver searches over groups rather than over unconstrained candidate subsets. At each group it branches over discard or one representation choice, tracks remaining budget, and uses an optimistic fractional bound over current marginal gains to prune subtrees. The bound is admissible for the clipped coverage objective because future marginal coverage can only decrease as more evidence units are covered. This solver is sufficient for the exact-small and adjudicated natural packages we report. The MILP audit is included to reduce the risk that an implementation detail in the custom solver defines the benchmark.

\begin{table}[h]
\centering
\small
\caption{Exact-solver certification. PuLP MILP and pure-Python branch-and-bound were run on the same requested-scope audit instances; equality is objective-value equality, allowing tied optimal stores with different candidate ids.}
\label{tab:milp}
\begin{tabular}{lccc}
\toprule
Audit & Rows & Objective matches & Max diff \\
\midrule
B\&B vs MILP & $1{,}200$ & $1{,}200/1{,}200$ & $0.0$ \\
\bottomrule
\end{tabular}
\end{table}

The package-oracle reference baseline is grouped value-threshold (\compiler). For each arriving experience, \compiler\ forms the budget-feasible candidates whose package-oracle marginal density exceeds a threshold, then inserts the admissible representation with largest raw marginal value. A threshold grid gives a conservative insertion-only constant-factor guarantee under exact marginals and small-item assumptions; the theorem and proof are in \Cref{app:proofs}. \compiler\ is included as a calibration baseline: because it has access to exact package marginals, it helps verify that the package and solver behave sensibly. It is not the proposed deployed writer.

Density-only representation choice is included as a negative control. It can be arbitrarily bad even for modular utility: a tiny candidate can have higher density while losing nearly all value. Controlled experiments below show this failure empirically.

This calibration role is useful because it makes representation-choice effects visible. A pure density rule may prefer a cheap but narrow memory; a pure value rule may spend the budget too quickly on expensive raw spans. \compiler\ combines a density threshold with within-group value choice, which is well aligned with the one-representation-per-experience structure, while remaining secondary to the benchmark artifact itself.

\begin{proposition}[Density-only can be arbitrarily bad]\label{prop:density-bad}
For every $\eta>0$, there is a one-experience modular instance satisfying $0<c_u\leq\budget/2$ in which a max-density representation rule obtains at most $\eta$ times the finite package optimum.
\end{proposition}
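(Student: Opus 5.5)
The plan is a direct construction: one experience with two representations, one tiny and high-density, one large and high-value. With a single experience, the partition-matroid constraint $|X\cap G_1|\le 1$ allows at most one representation to be stored. Because the objective is modular, the package optimum is just the larger of the two affordable values.

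Fix $\budget>0$ and $\eta>0$, and assume without loss of generality that $\eta<1$. Take $T=1$ and $G_1=\{u,v\}$ with costs $c_u=\epsilon$ and $c_v=\budget/2$, where $0<\epsilon\le\budget/2$ will be chosen below. To make the utility modular inside the coverage framework of \Cref{eq:coverage}, use two evidence units $r_u,r_v$ with $a_{u r_u}=a_{v r_v}=1$ and all other coverage entries zero, linear $h_r(z)=z$ (or clipped $\min(1,z)$; these agree here), and weights $w_{r_u}=\delta$, $w_{r_v}=1$. Then $F(\{u\})=\delta$ and $F(\{v\})=1$, so $\opt_{\package}(\budget)=1$, since $\{v\}$ is feasible and $\{u,v\}$ is excluded by the group constraint.

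Next I choose the parameters so the density rule is fooled. The densities are $\delta/\epsilon$ for $u$ and $1/(\budget/2)=2/\budget$ for $v$. I set $\delta=\eta$ and $\epsilon=\eta\budget/4$, which satisfies $0<\epsilon\le\budget/2$. This gives density $4/\budget>2/\budget$ for $u$, so the max-density rule stores $u$. Once $u$ is stored, the group constraint forbids adding $v$, and the rule earns $F=\eta=\eta\cdot\opt_{\package}(\budget)$, which meets the required bound.

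The only real subtlety is pinning down what ``max-density representation rule'' means, so that it cannot add $v$ afterward. I would define it as choosing, for the arriving experience, the feasible representation of largest $F$-marginal-to-cost ratio. The group constraint is exactly what makes that choice final. I would also break ties arbitrarily and note that the inequality is strict, so tie-breaking does not matter.
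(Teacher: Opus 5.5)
Your proposal is correct and is essentially the paper's construction: one experience with a cheap, low-value candidate whose density is exactly twice that of an expensive unit-value candidate. With $B=2$ and $\delta=\eta/2$ your instance is exactly the paper's ($c_a=\delta$, $F(\{a\})=2\delta$, $c_b=1$, $F(\{b\})=1$). The only differences are cosmetic: you keep $B$ general, you realize the modular utility explicitly through disjoint evidence units in the coverage objective, and you handle large $\eta$ by a WLOG reduction rather than the paper's choice $\delta=\min\{\eta/2,1/4\}$.
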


\begin{theorem}[Scoped package-oracle \compiler\ guarantee]\label{thm:gvt}
Let $F$ be normalized, monotone, and submodular on $\ground$. Suppose candidates arrive group-by-group, the algorithm is insertion-only, computes exact package-oracle marginals, enforces \Cref{eq:feasible}, and chooses the maximum raw marginal among threshold-admissible candidates in each arriving group. If every non-discard representation has cost $0<c_u\leq\budget/2$ and the threshold grid contains
\[
\lambda\in\left[(1-\varepsilon)\frac{\opt}{2\budget},\,\frac{\opt}{2\budget}\right],
\]
then the best grid solution satisfies $F(X)\geq(1-\varepsilon)\opt/4$. This is a calibration result for the package-oracle baseline; deployed writers without exact marginals are evaluated empirically.
\end{theorem}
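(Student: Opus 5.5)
We would run the standard two-case threshold analysis, fixing the grid value $\lambda\in[(1-\varepsilon)\opt/(2\budget),\,\opt/(2\budget)]$ guaranteed by the hypothesis and analyzing only the run with this $\lambda$. The best grid solution is at least as good as this run. Let $O\in\feasible_\budget$ be an optimal store, and let $X_{i}$ denote the algorithm's store after group $G_i$ has been processed, with final store $X$. By the partition constraint, $O$ contains at most one element $o_i\in G_i$ per group. Since the algorithm is insertion-only and enforces \Cref{eq:feasible}, $X$ is feasible. Each element $x_i$ it inserts satisfies $F(x_i\mid X_{i-1})\ge\lambda c_{x_i}$, and telescoping gives $F(X)=\sum_i F(x_i\mid X_{i-1})$, using $F(\emptyset)=0$ from \Cref{thm:coverage}-type normalization.

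\emph{Case 1: some $o_i$ is ever inadmissible because it would violate the budget.} Then $c(X_{i-1})>\budget-c_{o_i}\ge\budget/2$, using $c_u\le\budget/2$. Since every inserted element has density at least $\lambda$, telescoping gives $F(X)\ge\lambda\,c(X)\ge\lambda\budget/2\ge(1-\varepsilon)\opt/4$.

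\emph{Case 2: no $o_i$ is ever blocked by the budget.} We bound each $F(o_i\mid X)$, which by submodularity and $X_{i-1}\subseteq X$ is at most $F(o_i\mid X_{i-1})$. There are two ways $o_i$ can fail to be inserted. If $o_i$ was threshold-admissible, then the algorithm either inserted it, giving $F(o_i\mid X)=0$, or inserted some $x_i$ with a raw marginal at least as large, giving $F(o_i\mid X_{i-1})\le F(x_i\mid X_{i-1})$. If $o_i$ was not admissible, its density was below $\lambda$, so $F(o_i\mid X_{i-1})<\lambda c_{o_i}$. In all cases
\[
F(o_i\mid X)\le F(x_i\mid X_{i-1})+\lambda c_{o_i},
\]
where the first term is read as $0$ when group $i$ was discarded. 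Monotonicity and submodularity then give
\[
\opt\le F(X\cup O)\le F(X)+\sum_i F(o_i\mid X)\le 2F(X)+\lambda\,c(O)\le 2F(X)+\opt/2,
\]
because $c(O)\le\budget$ and $\lambda\le\opt/(2\budget)$. Hence $F(X)\ge\opt/4\ge(1-\varepsilon)\opt/4$.

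The main obstacle is the bookkeeping in Case 2. The max-raw-marginal selection rule must be compared against $o_i$ only when $o_i$ was actually admissible. Budget infeasibility has to be routed entirely into Case 1, and the partition matroid must ensure that each chosen $x_i$ is charged by at most one optimal element. I would also check that the per-group charging telescopes correctly to $F(X)$ when some groups are discarded.
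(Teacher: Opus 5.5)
Your proof is correct and follows essentially the same route as the paper. Both use a density bound $F(X)\ge\lambda C(X)\ge(1-\varepsilon)\opt/4$ when at least half the budget is used, and otherwise a submodularity charging argument in which below-threshold optimal elements cost at most $\lambda\budget\le\opt/2$ and admissible ones inject into accepted marginals that sum to $F(X)$. The only cosmetic difference is that you split on whether some $o_i$ was budget-blocked rather than on whether the final cost satisfies $C(X)\ge\budget/2$, which makes the paper's ``every missed optimal element was budget-feasible'' step immediate.
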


\section{Controlled Exact-OPT Packages}\label{sec:controlled}

\method-Small generates compact hidden event graphs with raw, fact, summary, tombstone, and compound-update candidates. The exact 500-seed budget sweep maps fractional budgets $0.01,0.02,0.05,0.10,0.20$ to absolute budgets $B=1,2,4,8,16$; we discuss the non-degenerate budgets $B=2,4,8,16$.

Each package instance contains enough redundancy to make representation choice nontrivial. Raw spans are broad but expensive; atomic facts are compact but may miss temporal or validity relations; summaries can cover several units at intermediate cost; tombstones and compound updates encode current-state transitions that are not recoverable from a stale fact alone. Baselines are constrained to the same package and budget. Recency raw keeps recent raw memories until the budget is exhausted. Fact-only and summary-only restrict the candidate set to one representation family. Density-only chooses the candidate with highest current marginal value per unit cost. No-tombstone OPT removes validity candidates and then resolves the exact problem, measuring the frontier loss from omitting that representation class.

\begin{figure}[t]
\centering
\includegraphics[width=0.72\linewidth]{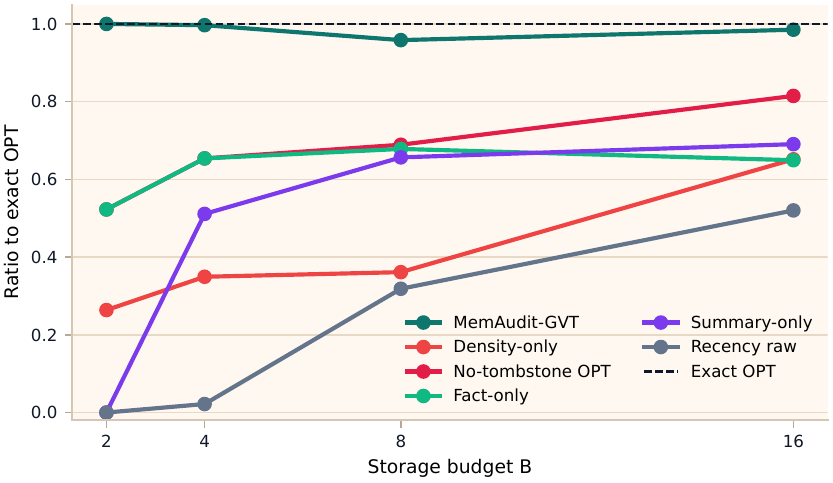}
\caption{Exact-small budget sweep. \method-\compiler\ remains near certified package OPT, while density-only, recency raw, and single-representation baselines degrade under tight storage. Bands are bootstrap 95\% intervals from the canonical 500-seed run.}
\label{fig:exact}
\end{figure}

\Cref{fig:exact} shows that the exact package denominator separates memory-writing policies. \method-\compiler\ covers all invalidation units selected by OPT at the reported budgets (mean invalidation coverage $2.00$ for each of $B=2,4,8,16$). Density-only covers no invalidation units at $B=2,4,8$ and reaches only $0.264$--$0.361$ of OPT there. Tombstone/compound-update candidates expand the feasible frontier: exact no-tombstone OPT reaches $0.523,0.654,0.689,0.815$ of full OPT across $B=2,4,8,16$.

The exact sweep is the cleanest evidence that \method\ measures the intended layer. The denominator is certified, all methods face the same candidate set and budget, and the gaps are not downstream retrieval artifacts. The strongest negative result is not that one heuristic loses on one seed, but that a plausible local rule systematically discards the validity-state units that exact OPT preserves. The no-tombstone OPT curve is also important because it is not an algorithm comparison: even the best package-feasible store is worse when the candidate generator lacks current-state representations. This distinguishes candidate-generation bottlenecks from selection bottlenecks.

\section{Validity-State Stress Packages}\label{sec:validity}

Memory systems often fail by preserving stale facts. We encode current truth, supersession, deletion, and abstention as positive validity-state evidence units rather than negative utility. A tombstone therefore covers an invalidation unit such as ``the old address is no longer current,'' while stale retrieval and stale answers remain separate diagnostic metrics.

This design choice is what lets validity fit into a monotone objective. A memory store is rewarded for preserving the evidence needed to answer ``what is true now?'' or ``should the agent abstain because the previous answer is invalid?'' It is not directly penalized for retaining old memories. A stale raw span may still be useful if paired with a later tombstone or compound update, because the reader can infer the update sequence. Without such validity-state candidates, a writer can spend the budget on high-density facts and still fail on future queries that ask for current truth.

\begin{figure}[t]
\centering
\begin{subfigure}{0.52\linewidth}
\centering
\includegraphics[width=\linewidth]{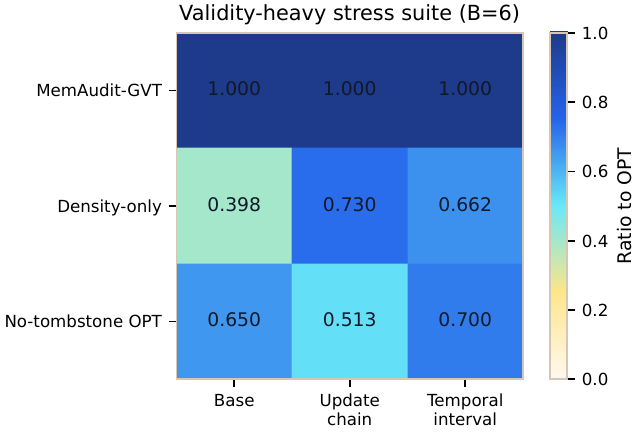}
\caption{Ratio to OPT at $B=6$.}
\end{subfigure}
\hfill
\begin{subfigure}{0.42\linewidth}
\centering
\includegraphics[width=\linewidth]{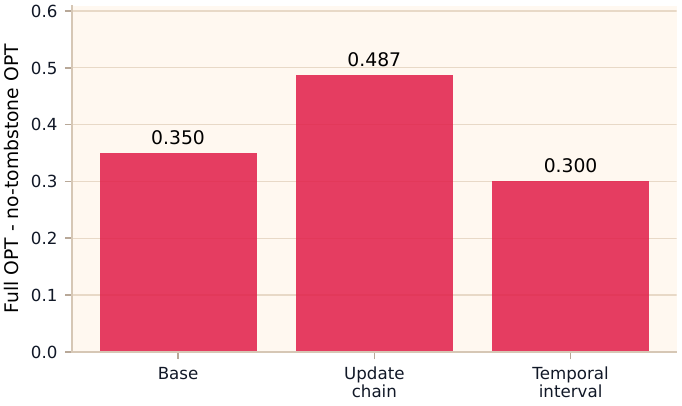}
\caption{Full OPT minus no-tombstone OPT.}
\end{subfigure}
\caption{Validity-heavy stress visualization for the headline exact distributions: base, update-chain, and temporal-interval.}
\label{fig:stress}
\end{figure}

The stress suite sharpens the validity-memory claim. At $B=6$, no-tombstone OPT falls to $0.513$ on \texttt{update\_chain} and $0.700$ on \texttt{temporal\_interval}, so validity-state candidates expand the package-feasible frontier in controlled update-heavy instances. Other stressors are useful diagnostics, but the main paper restricts claims to the distributions in \Cref{fig:stress}.

The three headline distributions cover different validity pressures. The base distribution has ordinary preferences and events with occasional updates. The \texttt{update\_chain} distribution creates repeated supersession, where the useful memory is often the latest current-state representation plus evidence that older states are obsolete. The \texttt{temporal\_interval} distribution stresses interval boundaries and recency of truth, so compact temporal updates matter more than raw recency alone. The full stress suite also includes density traps, scope shifts, redundancy-heavy packages, and hard abstention cases, but those are deferred because the central main-text claim is narrower: validity-state representation choices can expand the package-feasible semantic frontier.

\section{Natural and Exported-System Packages}\label{sec:natural}

Natural packages demonstrate that the same finite-package schema can be populated from realistic conversation support slices. We built a 200-example LongMemEval-style support-slice package with Gemini Flash-Lite through OpenRouter. Candidate generation sees answer-support sessions; query and gold-answer fields are used only in the separate benchmark-labeling pass that maps evidence units to query requirements. The package is support-sliced, not full-haystack LongMemEval, and the labels are source-backed model-adjudicated labels with explicit rejection and audit paths.

The natural construction route is intentionally auditable. The pipeline separates three artifacts: candidate memories, evidence units, and query requirements. Candidate memories are generated from support sessions. Evidence units are grounded in the same source text. Query requirements are assigned in a separate labeling pass. This separation reduces direct answer leakage into the writer while still allowing exact package scoring after the artifact is frozen. The resulting Natural-87 subset is deliberately modest: it is small, support-sliced, excludes 13 ambiguous examples from the stricter 100-example adjudication pass, and still depends on model-adjudicated evidence-unit construction. We therefore use it as a first audited natural package and exported-system diagnostic, not as a final full-haystack long-memory benchmark.

\begin{table}[h]
\centering
\scriptsize
\setlength{\tabcolsep}{2.5pt}
\renewcommand{\arraystretch}{1.08}
\caption{Natural package construction reliability. The package format exposes which examples are schema-valid, adjudicated for scoring, or flagged for ambiguity instead of silently mixing them into one benchmark pool.}
\label{tab:reliability}
\begin{tabular}{@{}>{\raggedright\arraybackslash}p{0.20\linewidth}
>{\raggedright\arraybackslash}p{0.10\linewidth}
>{\raggedright\arraybackslash}p{0.25\linewidth}
>{\raggedright\arraybackslash}p{0.09\linewidth}
>{\raggedright\arraybackslash}p{0.22\linewidth}@{}}
\toprule
Artifact & Scope & Resolved/agreement & Disputed & Role \\
\midrule
Natural-200 primary & $200$ ex. & $200$ schema-valid & -- & scalable package construction \\
Adjudicated subset & $100$ ex. & $87$ accepted/corrected & $13$ & main natural scoring set \\
Flash-Lite spot-check & $30$ ex. & $29$ accepted/corrected & $1$ & independent consistency check \\
Human-edited seed set & $100$ ex. & $100$ schema-valid & -- & prompt/schema development \\
\bottomrule
\end{tabular}
\end{table}

\Cref{tab:reliability} is intentionally conservative. The primary Natural-200 export used 681 API calls, 1{,}978{,}327 tokens, and about \$0.907. An initial secondary audit surfaced ambiguous required-unit cases, motivating a stricter Gemini-Flash adjudication pass on 100 examples; it accepted or corrected 87 examples and rejected 13 for unresolved ambiguity. The independent 30-example spot-check accepted/corrected 29 examples. Rejections mostly involved underspecified query requirements, support spans that licensed multiple plausible evidence decompositions, or candidate memories whose coverage depended on borderline summary entailment. This is an advantage of the package format: ambiguous evidence requirements become measurable dataset objects rather than hidden benchmark noise.

\begin{table}[t]
\centering
\scriptsize
\setlength{\tabcolsep}{2.7pt}
\renewcommand{\arraystretch}{1.08}
\caption{Coverage-label audit summary on the 87-example adjudicated subset. Cell-level rows use the released human audit frame; package-pass rows compare model package adjudications on the same cells. Ratio MAD is the mean absolute ratio change at $B=30,60,100$ where a package-level re-score is defined.}
\label{tab:coverage-audit}
\begin{tabular}{@{}>{\raggedright\arraybackslash}p{0.23\linewidth}
>{\raggedright\arraybackslash}p{0.18\linewidth}
>{\centering\arraybackslash}p{0.10\linewidth}
>{\centering\arraybackslash}p{0.10\linewidth}
>{\centering\arraybackslash}p{0.15\linewidth}
>{\raggedright\arraybackslash}p{0.16\linewidth}@{}}
\toprule
Audit pair & Scope & Cohen's $\kappa$ & Raw agree. & Ratio MAD & Ranking \\
\midrule
Human--human & $1{,}975$ cells & $0.831$ & $0.918$ & -- & -- \\
Human--Gemini & $1{,}813$ agreed cells & $0.863$ & $0.934$ & $0.035$ & preserved \\
Sonnet--Gemini & package pass & $0.649$ & $0.845$ & \begin{tabular}[c]{@{}c@{}}$0.079/0.073$\\$0.070$\end{tabular} & preserved, $\rho{=}1.0$ \\
Sonnet--human & package pass & $0.588$ & $0.811$ & -- & preserved \\
Sonnet--Gemini & blind cell pass & $0.531$ & $0.763$ & -- & -- \\
Sonnet--human & blind cell pass & $0.484$ & $0.739$ & -- & -- \\
\bottomrule
\end{tabular}
\end{table}

The coverage audit in \Cref{tab:coverage-audit} is part of the artifact, not a side note. On the 87-example adjudicated subset, two human annotators labeled all nonzero coverage cells plus a stratified 20\% sample of model-zero cells. Human agreement is substantial, Gemini agrees closely with the human-consensus subset, and re-scoring package writers with human labels changes ratios by only $0.035$ on average while preserving rankings. All natural-package rows have bootstrap intervals in the artifact; for example, Estimated-GVT at $B=100$ is $0.829$ $[0.761,0.887]$ under Gemini labels and $0.828$ $[0.755,0.893]$ under human-consensus labels. The Sonnet rows are useful as a stress test: rankings remain stable, but binary labels are not interchangeable with the human-validated Gemini labels.

The next question is whether the same package machinery can score memory systems that write in very different forms. The union denominator makes this possible: exported memories are added as candidates with measured costs and adjudicated coverage rows, so the score can distinguish extraction quality from budget-aware selection. These exported-system scores are post-hoc budget-pruned diagnostics over written memories, not claims that Mem0, Letta, or A-Mem were natively deployed with our package budget or objective.

\begin{figure}[t]
\centering
\includegraphics[width=0.96\linewidth]{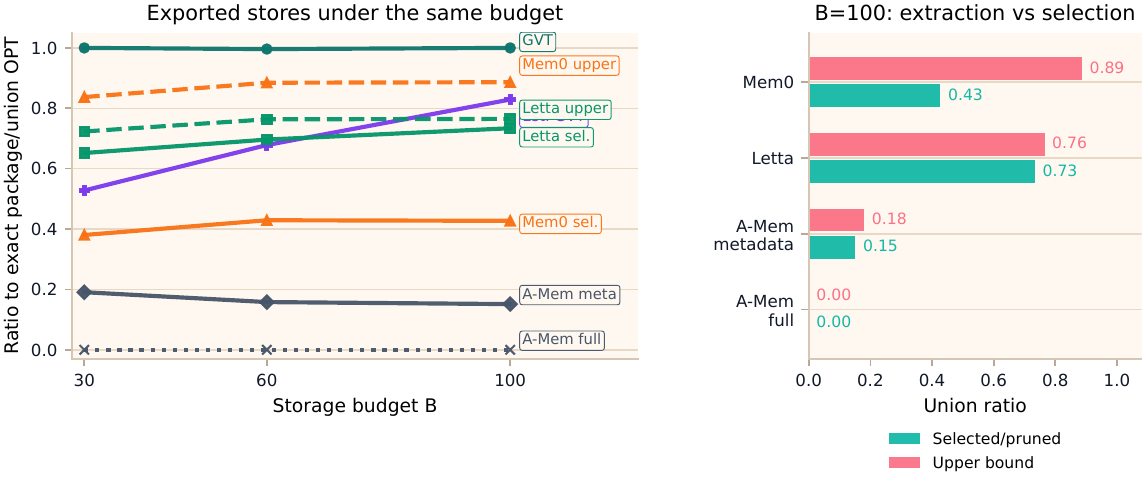}
\caption{Actual-system diagnostic on the 87-example adjudicated subset. Left: package or union ratio across budgets for package-oracle \method-\compiler, non-oracle Estimated-GVT, and exported Mem0/Letta/A-Mem stores. Right: at $B=100$, upper-pruned rows isolate extraction potential, while selected/pruned rows show the budget-aware selection actually achieved by each exported store.}
\label{fig:systems}
\end{figure}

\begin{table}[t]
\centering
\scriptsize
\setlength{\tabcolsep}{3.5pt}
\renewcommand{\arraystretch}{1.08}
\caption{External-system denominator comparison at $B=100$ on Natural-87. $F(Y)$ is the raw mean coverage value of the post-hoc pruned exported store. The package ratio uses the original package-candidate denominator; the union ratio uses the expanded denominator after adding the system's own exported memories.}
\label{tab:external-denominators}
\begin{tabular}{@{}lccc@{}}
\toprule
Exported diagnostic row & Raw $F(Y)$ & Package ratio & Union ratio \\
\midrule
Mem0 salience-pruned & $0.655$ & $0.489$ & $0.427$ \\
Mem0 upper-pruned & $1.443$ & $1.041$ & $0.886$ \\
Letta salience-pruned & $1.270$ & $0.881$ & $0.734$ \\
Letta upper-pruned & $1.322$ & $0.918$ & $0.765$ \\
A-Mem metadata view & $0.282$ & $0.218$ & $0.180$ \\
A-Mem full-store view & $0.000$ & $0.000$ & $0.000$ \\
\bottomrule
\end{tabular}
\end{table}

\Cref{fig:systems} and \Cref{tab:external-denominators} show that \method\ can diagnose heterogeneous memory systems without requiring a common internal representation. The non-oracle Estimated-GVT line is the more realistic package writer: it uses estimated values rather than hidden package-oracle marginals and reaches $0.53,0.68,0.83$ of package OPT across $B=30,60,100$. Letta's core+archival exports achieve the strongest salience-pruned union ratios on the adjudicated subset. Mem0's high upper-pruned bound shows that useful evidence is often extracted, while the gap to salience pruning isolates budget-aware selection as the bottleneck. A-Mem preserves evidence in rich notes, but their serialized size makes them poorly matched to strict small-budget stores unless paired with compaction or admission control.

The denominator comparison makes the union-package point concrete. Mem0's upper-pruned package ratio exceeds one because its exports add coverage not available to the original package candidates; the union denominator is the fairer finite comparison once those memories are admitted as candidate writes. Conversely, scoring only the system's own memories would remove the optimum and obscure whether the bottleneck is extraction or budget-aware selection. The exactly zero A-Mem full-store row is a cost-model diagnostic rather than a coverage bug: at $B=100$, no full serialized A-Mem note fits the budget (mean minimum full-note cost $1768$ words, or $464$ units under the byte-overhead rule), while its compact metadata view remains feasible.

The diagnostic also avoids a common evaluation shortcut. It would be easy to ask each memory system to answer downstream questions and report the final score. That number is useful, but it does not say whether a low score came from writing, retrieval, prompt formatting, model capability, or answer normalization. In \Cref{fig:systems}, the same evidence objective, costs, and budgets are used for all exported stores. This makes the failure mode more actionable: improve extraction if the upper-pruned bound is low; improve admission or compaction if the bound is high but the selected-store ratio is low; improve retrieval/reader behavior only after the written store is known to contain the necessary evidence.

\section{Related Work}\label{sec:related}

\paragraph{Memory systems and benchmarks.}
MemGPT, Reflexion, MemoryBank, Mem0, and A-Mem propose memory architectures or policies for long-term agents \cite{packer2023memgpt,shinn2023reflexion,zhong2023memorybank,chhikara2025mem0,xu2025amem}. LoCoMo and LongMemEval evaluate long-horizon memory behavior \cite{maharana2024locomo,wu2024longmemeval}, while MemSim/MemDaily constructs reliable personal-memory QA data through Bayesian simulation \cite{zhang2024memsim}. \method\ is complementary: it scores exported writes under a package or union denominator before retrieval and reader errors are mixed in, without requiring systems to share an internal memory representation.

\paragraph{Retrieval and submodular selection.}
RAG, MMR, and submodular context selection choose evidence after a query is known \cite{lewis2020rag,carbonell1998mmr,badanidiyuru2014,kumari2024bumblebee}. \method\ instead evaluates persistent write-time memory before future queries are known. Concave-over-modular objectives and knapsack submodular optimization are standard tools \cite{nemhauser1978,sviridenko2004,huang2017knapsack}; we use them to define and certify a benchmarkable semantic surrogate.

\paragraph{Dataset and evaluation protocols.}
Several long-memory datasets emphasize final answers after long histories. \method\ targets a different but compatible layer: whether the persistent store contains the needed evidence before the query-time pipeline runs. Each ratio is reproducible from a candidate table, cost vector, coverage matrix, evidence weights, and budget, making write-time failures auditable rather than hidden inside retrieval or prompt-state changes.

\section{Scope, Limitations, and Artifact}\label{sec:scope}

\method's scores are package-conditional by design: the candidate table, cost model, evidence schema, and budget define the evaluated object. This supplies a write-time diagnostic rather than replacing end-to-end assistant evaluation. If a generator omits the right memory, the package optimum measures the best store within that table; if a writer extracts useful memories but fails to retain the right subset, union and upper-pruned scores expose the selection bottleneck. Costs and one-representation groups are explicit package parameters; on Natural-87, byte-overhead costs and a $k=2$ relaxation change ratios at the percentage-points level rather than changing qualitative rankings.

Natural-87 is deliberately small, support-sliced, and model-adjudicated, with 13 ambiguous examples rejected from the stricter adjudication pass and human audits used to expose borderline entailment cases. Support slices isolate write-time memory before full-haystack expansion adds retrieval and session-selection confounds. The package-oracle reference writer uses exact coverage marginals; deployed systems need estimated utilities, learned generators, compaction, or native budget-aware admission.

\clearpage
\bibliographystyle{plain}
\bibliography{references}

\clearpage
\appendix

\section{Deferred Proofs}\label{app:proofs}

\begin{proof}[Proof of \Cref{thm:coverage}]
Normalization follows from $h_r(0)=0$. For monotonicity, adding any element $u$ increases each inner sum by $a_{ur}\geq0$, and $h_r$ is nondecreasing. For submodularity, let $S\subseteq T\subseteq\ground$ and $u\notin T$. Define $z_r(S)=\sum_{v\in S}a_{vr}$. Since $z_r(S)\leq z_r(T)$ and $h_r$ is concave, the increment $h_r(z+a_{ur})-h_r(z)$ is nonincreasing in $z$ for every $a_{ur}\geq0$. Multiplying by $w_r\geq0$ and summing over $r$ gives $\Delta(u\mid S)\geq\Delta(u\mid T)$.
\end{proof}

\begin{proof}[Proof of \Cref{prop:density-bad}]
Let $\budget=2$ and choose $\delta=\min\{\eta/2,1/4\}$. One experience has candidates $a,b$ with $c_a=\delta$, $F(\{a\})=2\delta$, $c_b=1$, and $F(\{b\})=1$. Candidate $a$ has density $2$, candidate $b$ has density $1$, so max-density chooses $a$ while OPT chooses $b$. The ratio is $2\delta\leq\eta$.
\end{proof}

\begin{proof}[Proof of \Cref{thm:gvt}]
Fix such a threshold. If the final cost $C(X)\geq\budget/2$, every accepted element had marginal density at least $\lambda$, so $F(X)\geq\lambda C(X)\geq(1-\varepsilon)\opt/4$. Otherwise $C(X)<\budget/2$. Let $O$ be optimal. By submodularity, $\opt\leq F(X)+\sum_{u\in O\setminus X}\Delta(u\mid X)$. Any missed optimal element was budget-feasible at its arrival. If it was below threshold, its marginal charge is at most $\lambda c_u$, and the total below-threshold charge is at most $\lambda\budget\leq\opt/2$. If it was threshold-admissible, the algorithm selected another candidate from the same group with at least as large raw marginal; these charges inject into accepted arrivals and sum to at most $F(X)$. Hence $\opt\leq2F(X)+\opt/2$, so $F(X)\geq\opt/4$.
\end{proof}

\section{Additional Experimental Details}\label{app:details}

This appendix records diagnostics omitted from the main story to keep the core paper within the 9-page main-body budget. These results are useful for reproducibility and interpretation, but the main claims rely on exact package optima and denominator-matched exported-store scoring.

\subsection{Controlled Package Generator}

\method-Small samples a compact hidden event graph, exposes an ordered stream of experiences, and generates finite representation choices for each experience. Candidate types include raw span, atomic fact, entity summary, tombstone, and compound update. Costs are measured in normalized word-equivalent units. Evidence units correspond to future-query support requirements, including positive units for validity state: a tombstone is useful when a future query requires knowing that an older fact has been invalidated, superseded, deleted, or should trigger abstention.

The benchmark reports ratios only when the denominator is certified. For the 500-seed exact-small run, each budgeted package is solved exactly with the pure-Python branch-and-bound solver. A separate 1,200-row audit verifies equality with the MILP formulation for the clipped-coverage objective. The no-tombstone OPT ablation removes tombstone and compound-update candidates before solving the same exact problem; it is therefore a representation-frontier comparison, not an algorithmic failure mode.

\begin{figure}[t]
\centering
\includegraphics[width=0.94\linewidth]{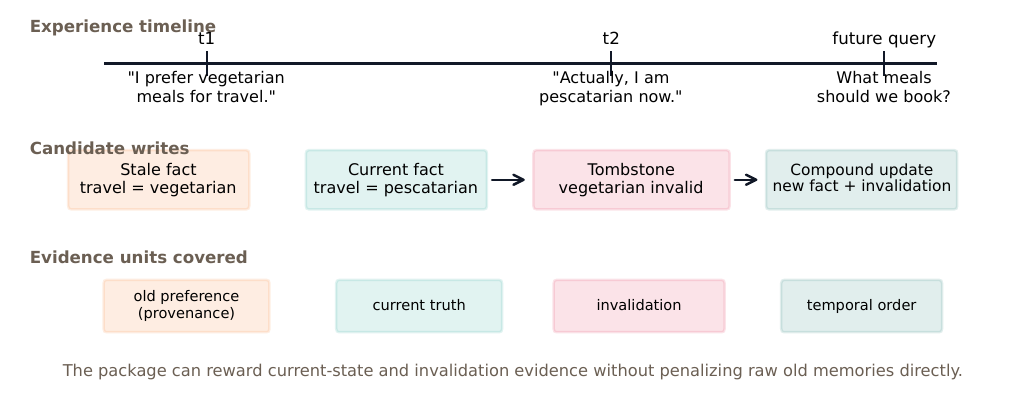}
\caption{Illustrative validity-state package. A stale fact, current fact, tombstone, and compound update are distinct candidate writes with different costs and evidence coverage. The package optimum may keep at most one candidate from an experience group, so representation choice is non-trivial rather than a union of all useful strings. The objective rewards current truth, supersession, deletion, and abstention evidence so stale-fact avoidance can be scored at write time.}
\label{fig:tombstone-timeline}
\end{figure}

\subsection{Reader Failure Audit}

The reader failure audit decomposes the LongMemEval-S focus diagnostic after the memory stores have been written. It is not a new denominator: it shows how often downstream errors come from missing retrieved evidence, abstention despite support, reader errors, unsupported answers, or scoring/parse uncertainty. This keeps the main package-ratio claim separate from retrieval and reader behavior.

\begin{figure}[t]
\centering
\includegraphics[width=0.92\linewidth]{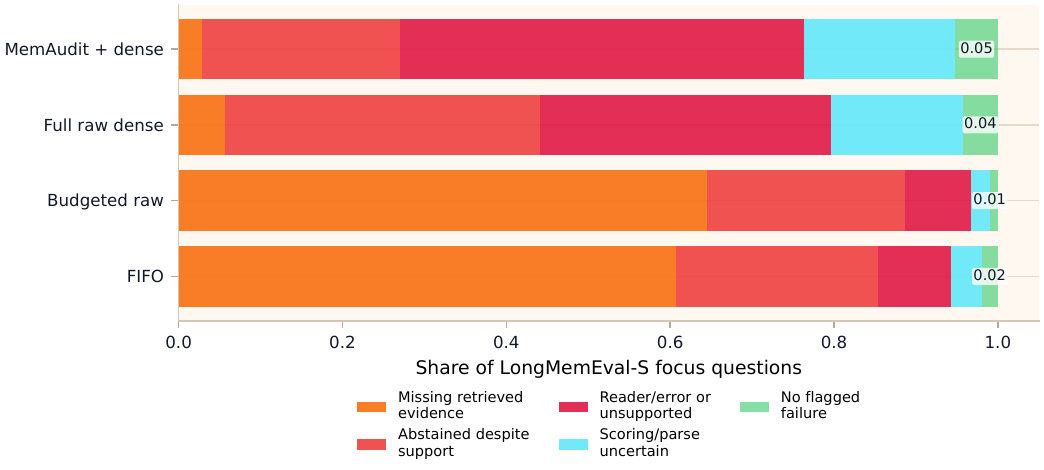}
\caption{LongMemEval-S focus reader failure audit. Bars decompose retrieval/reader outcomes after write-time memory selection; the green segment marks examples with no flagged failure in this audit.}
\label{fig:conditional-failure}
\end{figure}

\subsection{Natural Package Construction}

The Natural-200 package is built from support slices rather than full conversation histories. For each example, the candidate-generation stage receives source sessions and emits candidate memories with estimated costs. A separate annotation stage maps evidence units to future-query requirements and scores candidate coverage. We intentionally separate these stages so the writer does not see the query label during memory generation.

The annotation results should be read as model-adjudicated dataset construction with explicit audit hooks. The secondary 50-example audit surfaced ambiguous required-unit cases, which is why the main paper reports the stricter 87-example adjudicated subset for system diagnostics. The most common error/ambiguity categories were: required units that were too broad or could be decomposed several ways; summary candidates that implied but did not explicitly state an evidence unit; graph-edge memories whose relation label was plausible but too coarse; temporal or validity updates where current truth depended on ordering; and model-zero cells where humans judged a paraphrase sufficient. For the released human coverage audit, two project annotators familiar with the schema independently label all model-positive coverage cells on the 87-example subset plus a stratified 20\% sample of model-zero cells. We compute unweighted Cohen's $\kappa$ on the full stratified audit frame, rather than reweighting to the package's natural positive/negative balance. Human--human disagreements are concentrated in sampled model-zero cells ($116/162$) and in summary/graph-edge/atomic-fact boundaries; human--Gemini disagreements show the same pattern ($81/119$ from sampled zero cells), indicating that the main ambiguity is borderline entailment rather than schema failure. The human-edited seed set validates schema usability and remains in the artifact as prompt/schema development data.

A Claude Sonnet 4.5 cross-model audit gives a useful boundary on this result. Re-adjudicating the package with Sonnet preserves package-writer rankings exactly at $B=30,60,100$, but produces lower cell-level agreement than Gemini--human (\Cref{tab:coverage-audit}); we therefore treat Sonnet as a prompt/model-sensitivity stress test rather than an independent replacement annotator.

\subsection{External System Exports}

Mem0, A-Mem, and Letta are scored through exported written memories under a finite union denominator. The system memories are not forced into the package candidate schema; instead, they are added as new union candidates with their own costs and adjudicated coverage rows. This permits fair scoring of system-specific writes while preserving the exact denominator.

The main-system diagnostic reports post-hoc budget pruning because the public systems are not natively configured as package-budgeted writers. Recency pruning is a deployable baseline, salience pruning is a stronger heuristic, and upper pruning is an analysis-only bound over the system's own exported memories. A high upper-pruned bound with a low salience-pruned score indicates that extraction produced useful memories but budget-aware selection failed.

\section{Cost and Constraint Sensitivity}\label{app:sensitivity}

We audit two package-design choices on the 87-example adjudicated natural subset. First, we relax the partition constraint from one candidate per experience to two candidates per experience. Second, we replace the default word-equivalent cost with a per-record-overhead rule $c(u)=8+\lceil\mathrm{bytes}(u)/24\rceil$. All rows below are deterministic recomputations from cached package/export artifacts and make no API calls.

\begin{table}[h]
\centering
\caption{Package sensitivity on Natural-87. The $k=2$ relaxation changes exact OPT modestly, and the byte-overhead cost increases absolute OPT because short atomic memories remain cheap relative to the tested budgets.}
\label{tab:sensitivity-package}
\begin{tabular}{rrrrrr}
\toprule
$B$ & OPT $k=1$ & OPT $k=2$ & $k=2/k=1$ & byte-overhead OPT & byte/default \\
\midrule
30 & 0.931 & 0.943 & 1.013 & 1.310 & 1.322 \\
60 & 1.356 & 1.374 & 1.027 & 1.483 & 1.107 \\
100 & 1.489 & 1.540 & 1.060 & 1.632 & 1.131 \\
\bottomrule
\end{tabular}
\end{table}

\Cref{tab:sensitivity-systems} recomputes exported-system pruning at $B=100$ against the package-candidate denominator under both cost rules. The exact union-denominator results in the main text remain the primary system diagnostic; this table isolates whether the small-budget ranking is mostly a word-count artifact. The qualitative ordering is stable: Letta salience remains strongest among non-oracle exported selections, Mem0 salience trails, and A-Mem full-store remains infeasible at this budget. A-Mem's full notes are too large under both accounting rules: the mean minimum full-note cost is $1767.9$ words and $463.5$ byte-overhead units, far above $B=100$.

\begin{table}[h]
\centering
\caption{Exported-system cost sensitivity at $B=100$, reported as selected value divided by package-candidate OPT under each cost rule. Upper rows use hidden coverage and are analysis-only.}
\label{tab:sensitivity-systems}
\begin{tabular}{llr}
\toprule
Cost rule & Method & Ratio to package OPT \\
\midrule
word & Letta salience & 0.899 \\
word & Mem0 salience & 0.489 \\
word & A-Mem metadata recency & 0.190 \\
word & A-Mem full native & 0.000 \\
\midrule
byte-overhead & Letta salience & 0.800 \\
byte-overhead & Mem0 salience & 0.442 \\
byte-overhead & A-Mem metadata recency & 0.266 \\
byte-overhead & A-Mem full native & 0.000 \\
\bottomrule
\end{tabular}
\end{table}

\section{LongMemEval Transfer Diagnostics}\label{app:longmemeval}

We also ran a LongMemEval-S retrieval/reader diagnostic. These runs are not package-oracle ratios because the full benchmark does not provide a finite candidate package with exact coverage rows. They are included to check whether package-style writing can reduce retrieval cost and improve evidence use in a more natural long-memory task.

\begin{figure}[t]
\centering
\includegraphics[width=0.96\linewidth]{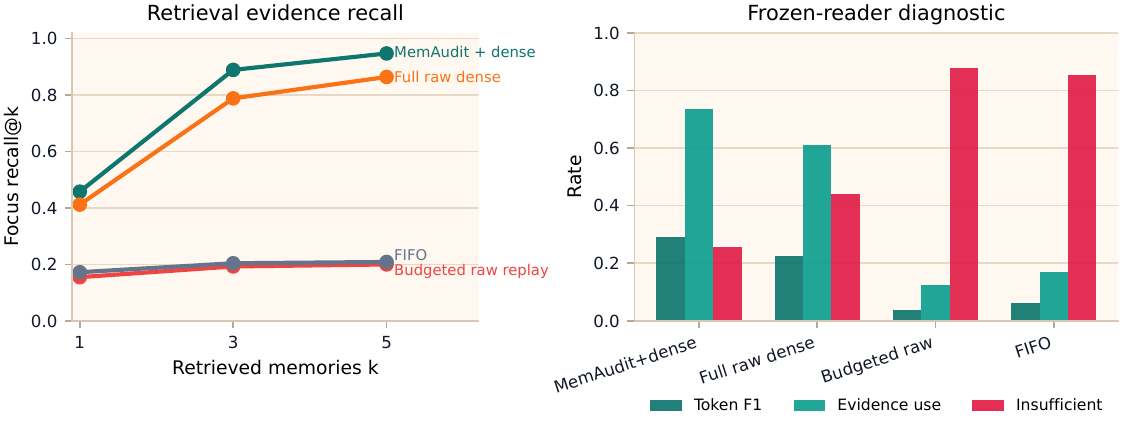}
\caption{LongMemEval-S transfer diagnostics. These are evidence-use and reader diagnostics, not exact package-oracle ratio results. Exact-match gains are not the basis of the main claim.}
\label{fig:longmemeval}
\end{figure}

The useful signal is narrower than end-to-end QA superiority. The \method-style store improved retrieval compactness and token-F1/evidence-use behavior relative to full raw-store dense retrieval, while exact-match changes were not the primary evidence. This is why the main paper uses LongMemEval only as a transfer diagnostic and does not claim significant answer-accuracy gains.

\section{Reproducibility Notes}\label{app:repro}

{\raggedright
The artifact contains deterministic exact-package runs, natural package exports, model-adjudication reports, and public-system scoring scripts. The safest reproduction path is to run the exact-small suite against certified OPT, audit a random subset with the MILP solver, re-score the 87-example adjudicated natural subset under package and union denominators, and recompute the public-system pruning diagnostics from exported Mem0, A-Mem, and Letta stores.
API-backed natural package construction caches prompts, completions, token counts, model versions, and adjudication decisions. Natural package candidate generation used OpenRouter with \texttt{google/gemini-3.1-flash-lite-preview}, temperature $0$, JSON response format, and a default 1400-token completion cap. Exported-system coverage/salience adjudication used cached Gemini Flash/Flash-Lite calls with temperature $0$ and JSON response format; the Letta coverage and salience caps were 1800 and 4000 tokens respectively, while the Mem0 writer configuration used temperature $0$ and a 700-token cap. The cross-model stress audit used OpenRouter \texttt{anthropic/claude-sonnet-4.5} with temperature $0$, JSON response format, cached package-level adjudications, and a separate cached 1{,}975-cell binary coverage pass. All cached calls include prompt hashes and token/cost metadata. The exact package-oracle denominator itself is deterministic once a package has been constructed.
\par}

\end{document}